%% file: iclr2027_conference.tex
\documentclass{article}
\PassOptionsToPackage{table}{xcolor}
\usepackage{iclr2027_conference,times}

\input{math_commands.tex}

\usepackage{amsthm}
\usepackage{centernot}
\usepackage[table]{xcolor}
\usepackage{hyperref}
\usepackage{url}
\usepackage{graphicx}
\usepackage{wrapfig}
\usepackage{booktabs}
\usepackage{placeins}
\usepackage{multirow}
\usepackage{enumitem}
\usepackage{afterpage}

\definecolor{qwenSeven}{HTML}{527A9A}
\definecolor{qwenFourteen}{HTML}{7B7194}
\definecolor{gemmaFour}{HTML}{A47B58}
\definecolor{gemmaTwelve}{HTML}{708878}
\definecolor{tableHeader}{HTML}{F0F1F2}

\newtheorem{definition}{Definition}[section]

\newtheorem{proposition}[definition]{Proposition}

\newtheorem*{abrhstatement}{\textit{Answer-Basin Representation Hypothesis}}

\title{The Answer-Basin Representation Hypothesis: We Are Not Probing or Steering Concepts}

\author{%
\normalfont%
\textbf{Manjiang Yu}$^{1}$,
\textbf{Hongji Li}$^{2}$,
\textbf{Zihan Wang}$^{1}$,
\textbf{Junwei Chen}$^{3}$, \\
\textbf{Xue Li}$^{1}$,
\textbf{Priyanka Singh}$^{1}$,
\textbf{Yang Cao}$^{3}$,
\textbf{Lijie Hu}$^{2}$ \\
\normalfont $^{1}$The University of Queensland \\
\normalfont $^{2}$Mohamed bin Zayed University of Artificial Intelligence \\
\normalfont $^{3}$Institute of Science Tokyo
}

\iclrfinalcopy

\begin{document}

\maketitle
\begingroup
\renewcommand{\thefootnote}{}
\footnotetext{Code: \url{https://github.com/V1centNevwake/answer-basin-representation}.}
\endgroup
\lhead{Preprint}

\input{section/iclr/0_abstract.tex}
\input{section/iclr/1_intro.tex}
\input{section/iclr/2_hypothesis.tex}
\input{section/iclr/3_probing.tex}
\input{section/iclr/4_steering.tex}
\input{section/iclr/5_related_work_limitations.tex}
\input{section/iclr/6_conclusion.tex}

\bibliography{hongji-version}
\bibliographystyle{iclr2027_conference}

\appendix

\input{section/iclr/appendix_a_measure_theory.tex}

\input{section/iclr/appendix_b_experimental_details.tex}
\input{section/iclr/appendix_c_answer_letter_steering.tex}

\input{section/iclr/appendix_e_basin_mass_geometry.tex}

\clearpage

\input{section/iclr/appendix_f_gsm8k_alignment.tex}
\input{section/iclr/appendix_g_safetybench_probing.tex}
\input{section/iclr/appendix_d_related_work.tex}

\end{document}

%% file: math_commands.tex
\usepackage{amsmath,amsfonts,bm}

\def\eqref#1{equation~\ref{#1}}

\def\1{\bm{1}}

\DeclareMathAlphabet{\mathsfit}{\encodingdefault}{\sfdefault}{m}{sl}
\SetMathAlphabet{\mathsfit}{bold}{\encodingdefault}{\sfdefault}{bx}{n}



%% file: section/iclr/0_abstract.tex
\begin{abstract}
The \emph{Linear Representation Hypothesis} associates high-level concepts with
directions in language models, but it remains unclear how these
concept-related linear structures are organized within the model. We propose
the \textit{Answer-Basin Representation Hypothesis}: the probability measure
induced over answers by the model's continuation distribution organizes
these linear structures, with its statistics represented along linear
directions shared across questions.
All continuations yielding the same answer form an answer basin,
whose mass is their total probability. These basin masses define the
pushforward probability measure over answers.
We posit that concept-related linear structure emerges from differences
in the answer measure rather than being determined by changes in concept labels.
Experiments across models and tasks link concept-consistent effects and
their reversals in probing and steering to the alignment between concept
labels and the answer measure.
\end{abstract}

%% file: section/iclr/1_intro.tex
\section{Introduction}
\vspace{-0.1in}
\label{sec:introduction}

\afterpage{%
\begin{figure*}[!t]
    \centering
    \includegraphics[width=\textwidth]{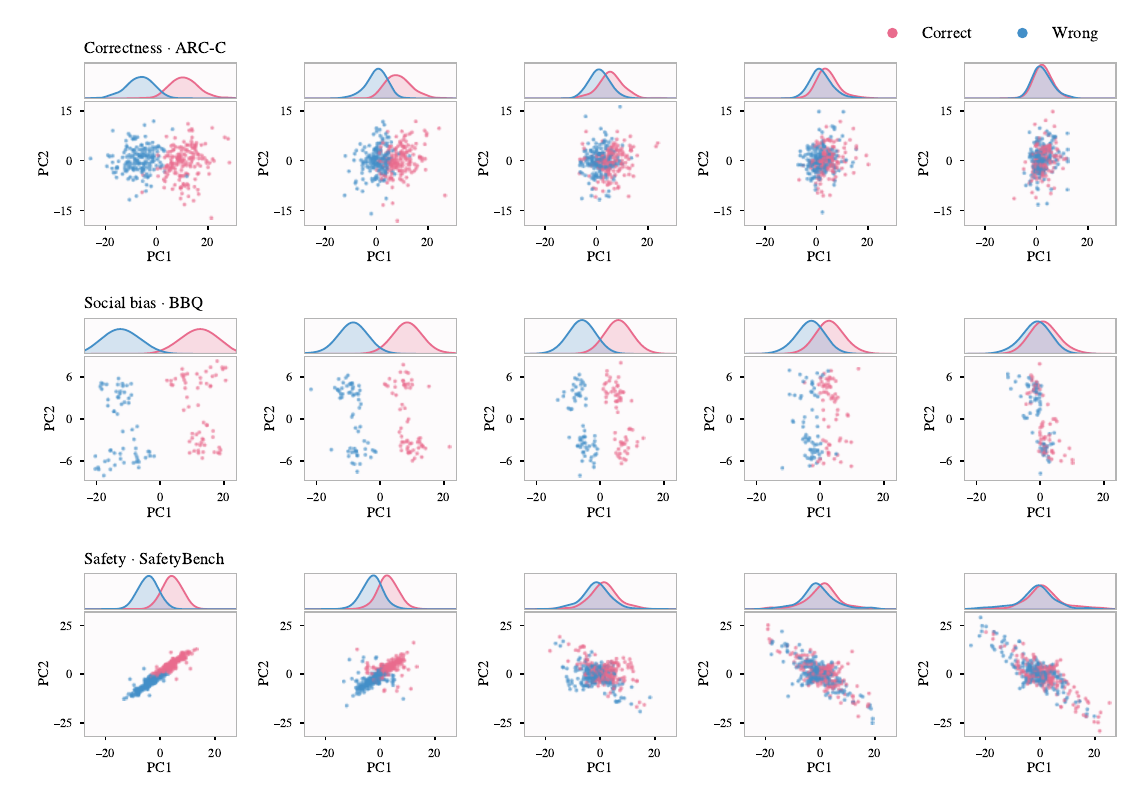}
    \caption{\textbf{Larger answer-basin mass gaps accompany clearer separation of correct and wrong answers.}
    Qwen2.5-7B representations on three attributes: correctness (ARC-C),
    social bias (BBQ), and safety (SafetyBench).
    Columns show correct--wrong answer pairs with decreasing mass gaps
    under the candidate-answer measure. Points are answer representations in a shared PCA
    space within each row; curves show their PC1 distributions.}
    \label{fig:basin-gap-separability}
\end{figure*}%
}

The Linear Representation Hypothesis associates high-level concepts with
directions in language models \citep{park2024linear}. This view has inspired
extensive interpretability research. Researchers use linear probes to
identify concepts such as correctness and safety
\citep{marks2024geometry,cencerrado2025noanswer,zou2023representation}, and
apply activation steering to influence the corresponding model behavior
\citep{turner2023activation,arditi2024refusal}.
However, it remains unclear how these concept-related linear structures are
organized within the model.

To understand this organization, we start from the generative distribution
of a language model. An autoregressive language model predicts tokens
sequentially, and the conditional probabilities at successive steps
determine the probability of a complete continuation. For example, a math problem can be
solved in different ways that lead to the same final answer
\citep{wang2023selfconsistency}. Therefore, for a given question, we define
an answer basin as the set of all continuations
yielding the same answer and define its mass as their total probability
under the model. These basin masses define the pushforward probability
measure over answers
\citep{kallenberg2021foundations}. We call this the answer probability
measure, or answer measure.\looseness=-1

Our central observation is that \textbf{concept-related linear structure emerges with differences in the answer measure rather than changes in concept labels.} Across correctness, social bias, and safety, and across Qwen and Gemma models at different scales, the correct--wrong distinction remains fixed while representational separation varies with the answer-basin mass gap
(Figure~\ref{fig:basin-gap-separability}), which means that concept labels alone do not explain why the observed separation varies across these groups. We propose the \textit{Answer-Basin Representation Hypothesis}: statistics of the answer measure organize linear structure in hidden states along directions shared across questions, and their alignment with concept labels determines when these directions exhibit concept-consistent behavior.

Probing experiments show that linear probes capture answer concentration
before generation and relative basin mass after generation. Their
concept-level behavior, however, depends on the association between labels
and basin mass in the training data. Reversing this association reverses
their concept predictions on unseen questions, even though the labeling
rules and training objective remain unchanged. More interestingly, probes
trained solely on basin-mass comparisons between two incorrect answers to
the same question still recover candidate mass orderings on unseen
questions.\looseness=-1

Steering experiments show that reversing the correct--wrong mass
relationship in the source data produces vectors with nearly opposite
orientations in a shared PCA projection, even though the labels remain
unchanged. When a fixed vector is applied across target questions, answers
with greater initial basin mass receive larger relative gains. An
intervention oriented toward correct answers can therefore produce
opposite concept-level effects across questions. The answer measure thus
connects vector geometry with intervention behavior, explaining when
steering produces concept-consistent control and when its effects reverse.

Our main contributions are:
\begin{enumerate}[nosep, leftmargin=*]
    \item We formalize answer basins and the answer measure, and
    propose the \textit{Answer-Basin Representation Hypothesis} to explain
    how concept-related linear structure emerges and is organized.
    \item Through probing and steering experiments, we show how
    concept-consistent effects and their reversals depend on the
    relationship between labels and the answer measure.
\end{enumerate}

\vspace{-0.1in}

%% file: section/iclr/2_hypothesis.tex
\section{The \textnormal{\textit{Answer-Basin Representation Hypothesis}}}
\vspace{-0.1in}
\label{sec:abrh}

\label{sec:linear-direction-identification}
Under a fixed evaluation protocol, model outputs induce a probability
measure over answer identities. The \textit{Answer-Basin Representation
Hypothesis} connects this measure's statistics to hidden
representations, probing, and steering to explain the
geometry in Figure~\ref{fig:basin-gap-separability} and
Appendix~\ref{app:basin-gap-geometry}.

\vspace{-0.1in}
\subsection{Answer Measures and Answer Basins}
\vspace{-0.1in}
\label{sec:answer-basins}

Throughout, an evaluation instance fixes a question $q$, a countable answer
space $\mathcal A_q$, and an answer-measure protocol. The protocol maps model
outputs to a probability measure $\mu_q$ on $\mathcal A_q$; we suppress the
protocol in the notation and write $m_q(a)=\mu_q(\{a\})$ for answer mass. We
construct this measure using either a generation protocol or a finite-choice
protocol.

Under a generation protocol, let $P_q=P_\theta(\cdot\mid q)$ be the
continuation measure on $\mathcal Y_q$, and let the measurable map
$\phi_q:\mathcal Y_q\to\mathcal A_q$ assign each continuation its answer
identity. The fiber $\mathcal B_q(a)=\phi_q^{-1}(\{a\})$, the answer basin of
$a$, contains all continuations producing it. The pushforward answer measure
and basin masses are
\begin{equation}
    \mu_q=(\phi_q)_\#P_q,
    \qquad
    m_q(a)=\mu_q(\{a\})=P_q\!\left(\mathcal B_q(a)\right).
    \label{eq:answer-basin-distribution}
\end{equation}

The map $\phi_q$ therefore converts probability over token trajectories into
probability over answer identities. Basin mass aggregates all trajectories in a
fiber and defines the model's answer-level probability landscape
\citep{wang2023selfconsistency,farquhar2024semanticentropy}.

A finite-choice protocol constructs an answer measure directly on the listed
answers. The elements of $\mathcal A_q$ are the complete candidate answers;
option letters serve as their labels under a fixed presentation. In our
letter-based evaluations, the model scores candidates through their assigned
letters. We vary the letter--answer assignments and map scores back to the
corresponding answers. A fixed scoring-and-aggregation rule converts the
model-derived scores into nonnegative masses summing to one, indexed by the
underlying answers. This construction defines a candidate measure on
$\mathcal A_q$.
Section~\ref{sec:measuring-answer-basins} specifies the generation and
finite-choice constructions used in our experiments. The answer-level mass,
concentration, and signed-contrast quantities below are defined for either
fixed construction.

For $A,A'\stackrel{\mathrm{iid}}{\sim}\mu_q$, the collision concentration
\citep{simpson1949diversity} summarizes how the answer measure is concentrated
and equals the expected mass of an answer drawn from it:
\begin{equation}
    C_2(\mu_q)
    :=\Pr(A=A'\mid q)
    =\sum_{a\in\mathcal A_q}m_q(a)^2
    =\mathbb E_{A\sim\mu_q}[m_q(A)].
    \label{eq:answer-measure-collision}
\end{equation}

For two positive-mass answers, define the log-mass contrast
\begin{equation}
    \Delta_q^{\log m}(a,b)
    :=
    \log\frac{m_q(a)}{m_q(b)}.
    \label{eq:log-mass-contrast}
\end{equation}

\vspace{-0.1in}
\subsection{The \textnormal{\textit{Answer-Basin Representation Hypothesis}}}
\vspace{-0.1in}
\label{sec:representation-hypothesis}

The preceding section defines the answer measure and its two constructions.
We examine hidden states before answer generation at the final prompt token
and after answer generation at the final answer token specified by the
protocol. These positions are the question and answer boundaries,
respectively. Let $h_Q(q)$ denote the question-boundary state and $H_A(q,a)$
the answer-state random variable for answer $a$ under the fixed protocol.
Its generation and finite-choice constructions are given in
Appendix~\ref{app:basin-conditioned-hidden-states}.

\begin{definition}[Answer-conditioned representation]
For each positive-mass answer, define
\begin{equation}
    \bar h_q(a)
    :=
    \mathbb E\!\left[H_A(q,a)\right].
    \label{eq:basin-conditioned-representation}
\end{equation}
\end{definition}

We now ask whether shared linear directions reveal answer mass in these means
and concentration in question-boundary states.

\begin{abrhstatement}
Fix a model, a residual-stream layer, an answer-measure construction, and a
state-extraction protocol. Hidden-state differences are consistently oriented
with signed changes in statistics of the same answer measure at the question
and answer boundaries. There exist directions $w_Q,w_A\in\mathbb R^d$,
shared across questions, such that:

\textbf{At the question boundary.} For any questions $q,q'$ with
$C_2(\mu_q)\neq C_2(\mu_{q'})$,
\begin{equation}
    \bigl[C_2(\mu_q)-C_2(\mu_{q'})\bigr]
    w_Q^\top\bigl[h_Q(q)-h_Q(q')\bigr]
    >0,
    \label{eq:question-final-abrh}
\end{equation}
\textbf{At the answer boundary.} For any question $q$ and positive-mass
answers $a,b$ with $m_q(a)\neq m_q(b)$,
\begin{equation}
    \Delta_q^{\log m}(a,b)
    w_A^\top\bigl[\bar h_q(a)-\bar h_q(b)\bigr]
    >0.
    \label{eq:answer-final-abrh}
\end{equation}
\end{abrhstatement}

In other words, increasing concentration corresponds to positive displacement along
$w_Q$, while increasing answer mass corresponds to positive displacement along
$w_A$.

The \textit{Answer-Basin Representation Hypothesis} describes an idealized
linear ordering of hidden representations by answer-measure statistics.
We learn shared directions from finite data to test how
closely models realize this structure and how well these directions
generalize across questions.\looseness=-1

\paragraph{Answer-measure transport under intervention.}
\label{sec:basin-transport}
The readout hypothesis motivates a corresponding prediction for intervention.
An answer-mass-oriented direction $v$ averages within-question differences of
the answer-conditioned means, each pointing toward the higher-mass answer
(Appendix~\ref{app:basin-direction-construction}). At layer $\ell$, we apply
$h_\ell^{(\alpha)}=h_\ell+\alpha v$ and reapply the same answer-measure
protocol to the intervened model outputs. Denote the resulting measure by
$\mu_{q,\alpha}^{(v)}$. Under generation, this is the pushforward of the
intervened continuation measure; under finite-choice scoring, it is obtained
by applying the same scoring-and-aggregation rule to the intervened candidate
scores. For answers with positive baseline and intervened masses, define
\begin{equation}
    m_{q,\alpha}^{(v)}(a):=\mu_{q,\alpha}^{(v)}(\{a\}),\quad
    G_{q,v}^{(\alpha)}(a):=\log\frac{m_{q,\alpha}^{(v)}(a)}{m_q(a)}.
    \label{eq:intervened-basin-mass}
\end{equation}
For such a direction, we posit an additional transport hypothesis: positive
steering gives initially higher-mass answers greater relative amplification.
For answers $a,b$ with positive baseline and intervened masses and
$m_q(a)\neq m_q(b)$,
\begin{equation}
    \boxed{
    \Delta_q^{\log m}(a,b)
    \left[
        G_{q,v}^{(\alpha)}(a)
        -
        G_{q,v}^{(\alpha)}(b)
    \right]
    >0,
    \qquad
    \alpha>0.}
    \label{eq:basin-amplification-ordering}
\end{equation}
Positive steering is predicted to increase relative preference for the
initially higher-mass answer.\looseness=-1

\paragraph{Concept-level orientation.}
\label{sec:concept-basin-alignment}
An external attribute assigns each answer a label
$C_q:\mathcal A_q\to\{0,1\}$, such as correctness or truthfulness. For a pair
$(a,b)$, labels and mass ordering align when
$[C_q(a)-C_q(b)]\Delta_q^{\log m}(a,b)>0$, with greater mass on the
attribute-positive answer. A negative product indicates conflict, with
greater mass on the attribute-negative answer. Answers with the same label
can still have different basin masses. On binary questions, positive steering
therefore favors the desired answer under alignment and its alternative under
conflict.

For generation protocols, the measure-change identities are developed in
Appendix~\ref{app:answer-change-of-measure}.

\vspace{-0.1in}
\subsection{Instantiating Answer Measures}
\vspace{-0.1in}
\label{sec:measuring-answer-basins}

We instantiate both answer-measure constructions: repeated sampling estimates
the measure induced by generation, while candidate scoring defines a measure
over listed options.

\paragraph{Sampling-based estimation.}
Under a fixed generation protocol, draw $K\ge2$ independent continuations and let
$N_q(a)$ count those assigned to answer $a$ by $\phi_q$. We estimate mass and
concentration by
\begin{equation}
    \widehat m_q(a)
    :=
    \frac{N_q(a)}{K},
    \qquad
    \widehat C_{2,q}
    :=
    \frac{\sum_a N_q(a)\bigl(N_q(a)-1\bigr)}{K(K-1)}.
    \label{eq:empirical-question-concentration}
\end{equation}
The second quantity is the fraction of agreeing sample pairs and is unbiased
for $C_2(\mu_q)$.

\paragraph{Candidate-answer measures.}
Given a question $q$ and a finite candidate-answer set $\mathcal A_q$,
we use a specified choice-response protocol: the model reads the question
and candidate answers and outputs an option label. For each option
permutation, we map the label outputs back to the corresponding answers.
The protocol's fixed rules for scoring, aggregation, and normalization
then yield answer masses $\widetilde m_q(a)$ satisfying
\[
    \widetilde m_q(a)\geq 0,
    \qquad
    \sum_{a\in\mathcal A_q}\widetilde m_q(a)=1.
\]
These masses define the answer measure under this protocol and its
concentration:
\begin{equation}
    \widetilde\mu_q
    :=
    \sum_{a\in\mathcal A_q}\widetilde m_q(a)\,\delta_a,
    \qquad
    \widetilde C_{2,q}
    :=
    C_2(\widetilde\mu_q)
    =
    \sum_{a\in\mathcal A_q}\widetilde m_q(a)^2.
    \label{eq:candidate-answer-measure}
\end{equation}
Here, $\delta_a$ is the unit point mass at answer $a$.
Labels from different permutations map to the same candidate-answer set,
so the measure is indexed by answer identity. Option presentation, score
aggregation, and normalization jointly determine this answer measure;
the specific protocols are given in Appendix~\ref{app:experimental-details}.
Hats denote sampling estimates under a generation protocol; tildes denote
quantities defined by a candidate-scoring protocol.

\vspace{-0.1in}

%% file: section/iclr/3_probing.tex
\section{Probing Answer-Basin Representations}
\vspace{-0.1in}
\label{sec:empirical-foundations}

The \textit{Answer-Basin Representation Hypothesis} predicts that shared
linear directions reflect concentration ordering across questions before
answer generation and basin-mass ordering within a question after answer
generation. We test these predictions on held-out questions using the sampling
estimates and candidate measures in Section~\ref{sec:measuring-answer-basins}.
For readouts after answer generation, we examine performance across
datasets, model families, and parameter scales, its relation to training-pair
mass gaps, and whether readout persists when the paired training answers
share the same concept label.\looseness=-1

\label{sec:probing-answer-basin-contrasts}

For $w=w_A$, Eq.~\ref{eq:answer-final-abrh} gives higher mean probe scores to
higher-mass answers within each question; the linearity derivation is given
in Appendix~\ref{app:basin-conditioned-hidden-states}.

\vspace{-0.1in}
\subsection{Experimental Setup and Metrics}
\vspace{-0.1in}
Before answer generation, probes predict concentration from the final prompt-token state
on GSM8K \citep{cobbe2021gsm8k}, BBQ \citep{parrish2022bbq},
and SafetyBench \citep{zhang2024safetybench}.
Each probe uses 500 training questions; targets are
Eq.~\ref{eq:empirical-question-concentration} for repeated generations or
$\widetilde C_{2,q}$ for candidate measures derived from option scores.
We report pairwise ordering accuracy (Order Acc.) and Spearman correlation
($\rho$); see Appendix~\ref{app:question-final-protocol} for details.

After answer generation, probes use the hidden state at the last token of each candidate answer
appended to the question and learn basin-mass ordering on ARC-Challenge
\citep{clark2018arc} and SafetyBench.
Permutation-aggregated option preferences define $\widetilde\mu_q$.
Low, Medium, and High vary the
training-pair mass gap,
$\delta_q(a,b):=\left|\widetilde m_q(a)-\widetilde m_q(b)\right|$,
to test its relation to readout generalization. Wrong-only uses pairs of
answers that are both incorrect according to the dataset's gold answer but
have different basin masses, using their ordering as supervision.
Basin ordering averages probe--mass ranking agreement over all candidate
pairs within each test question, then across questions. Top-1 correctness
evaluates the highest-scoring answer against the gold label. Baseline definitions, training budgets, and evaluation splits are given in
Appendix~\ref{app:table2-mean-logp}.

\vspace{-0.1in}
\subsection{Before Answer Generation: Answer Concentration}
\vspace{-0.1in}
\label{sec:question-final-basin-concentration}

\begin{wrapfigure}[13]{r}{0.36\textwidth}
    \vspace{-9pt}
    \centering
    \includegraphics[width=0.80\linewidth,trim=8bp 0 0 0,clip]{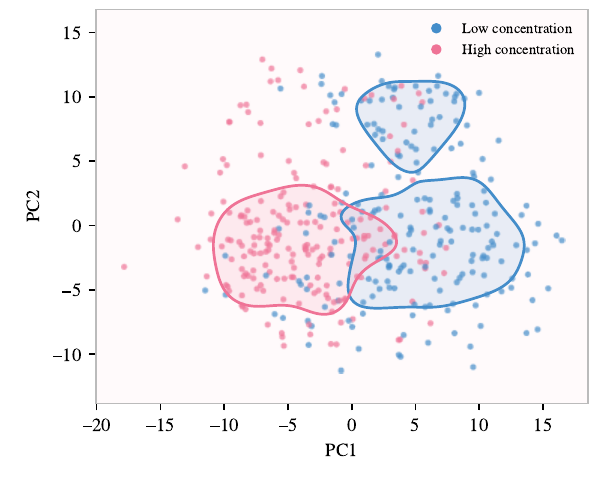}
    \setlength{\abovecaptionskip}{0pt}
    \caption{\footnotesize
PCA projection of the question-final hidden states for 200 high- and
200 low-concentration GSM8K questions (Qwen2.5-7B).
Contours enclose 50\% density regions.}
    \label{fig:question-end-concentration}
\end{wrapfigure}
\begingroup
\emergencystretch=1em
\hyphenpenalty=200
\brokenpenalty=10000

Eq.~\ref{eq:answer-measure-collision} identifies the concentration of the
future answer distribution with the expected basin mass of the answer
eventually realized. We therefore test whether a linear readout from the
hidden state before answer generation preserves concentration ordering across questions,
as posited in Eq.~\ref{eq:question-final-abrh}.

\begin{table}[!t]
    \centering
    \caption{\textbf{Probing answer concentration.}
    Linear scores before answer generation track future answer
    concentration on held-out questions.}
    \label{tab:question-final-basin-concentration}
    \small
    \setlength{\tabcolsep}{3.0pt}
    \renewcommand{\arraystretch}{1.10}
    \begin{tabular*}{\linewidth}{@{\extracolsep{\fill}}lcccccc@{}}
        \toprule
        \multicolumn{1}{c}{\multirow{2}{*}{Model}}
        & \multicolumn{2}{c}{GSM8K}
        & \multicolumn{2}{c}{BBQ}
        & \multicolumn{2}{c}{SafetyBench} \\
        \cmidrule(lr){2-3}\cmidrule(lr){4-5}\cmidrule(l){6-7}
        & Order Acc. $\uparrow$ & $\rho\uparrow$
        & Order Acc. $\uparrow$ & $\rho\uparrow$
        & Order Acc. $\uparrow$ & $\rho\uparrow$ \\
        \midrule
        Qwen2.5-7B base
        & $74.84$ & $0.676$
        & $70.52\pm1.40$ & $0.582$
        & $67.95\pm1.64$ & $0.515$ \\
        Qwen2.5-14B base
        & $77.60$ & $0.720$
        & $73.92\pm1.86$ & $0.657$
        & $67.95\pm0.86$ & $0.521$ \\
        Gemma-3-4B-PT
        & $75.45$ & $0.700$
        & $63.48\pm3.92$ & $0.384$
        & $70.42\pm1.18$ & $0.571$ \\
        Gemma-3-12B-PT
        & $78.20$ & $0.745$
        & $74.48\pm0.86$ & $0.677$
        & $63.42\pm0.82$ & $0.387$ \\
        \bottomrule
    \end{tabular*}
\end{table}

\textbf{Concentration is readable before answer generation.}\\
Table~\ref{tab:question-final-basin-concentration} evaluates concentration
readout on held-out questions. Order Acc. measures how often probe scores
correctly order two test questions by estimated answer concentration.
It exceeds chance across all evaluated models and tasks.
The moderate to strong positive Spearman correlations ($\rho$) show
that higher probe scores tend to correspond to higher estimated
concentration across the test set. Both metrics support the prediction
in Eq.~\ref{eq:question-final-abrh} that a shared linear readout captures
concentration ordering before answer generation.

\WFclear
\interlinepenalty=10000
Figure~\ref{fig:question-end-concentration} visualizes question-final
representations from Qwen2.5-7B for 200 high- and 200 low-concentration
GSM8K questions, grouped by estimated answer concentration $\widehat C_2$.
Contours enclose 50\% density regions. The two groups have different
distributions in the PCA projection.
\par\endgroup

\subsection{After Answer Generation: Answer-Basin Mass}
\label{sec:empirical-setting}

\textbf{Alignment and conflict reverse concept predictions.}
We examine how the relationship between correctness labels and basin mass
in the training data shapes concept predictions on held-out questions.
We train correctness probes on answer pairs with the largest positive
(Align) or most negative (Conflict) signed mass gaps,
$\widetilde m_q(a_{\mathrm{correct}})-\widetilde m_q(a_{\mathrm{wrong}})$,
keeping correct answers as the positive label in both conditions.
On the same held-out questions, Align exceeds the mean-logp baseline
and Conflict falls below it in all twelve model--dataset combinations
on ARC-C, BBQ, and SafetyBench
(Figure~\ref{fig:align-conflict-top1}). The relationship between basin mass
and correctness in training determines the probe's concept orientation.
Additional GSM8K results are reported in
Appendix~\ref{app:gsm8k-align-conflict}.

\begin{figure}[!htbp]
    \centering
    \includegraphics[width=\linewidth]{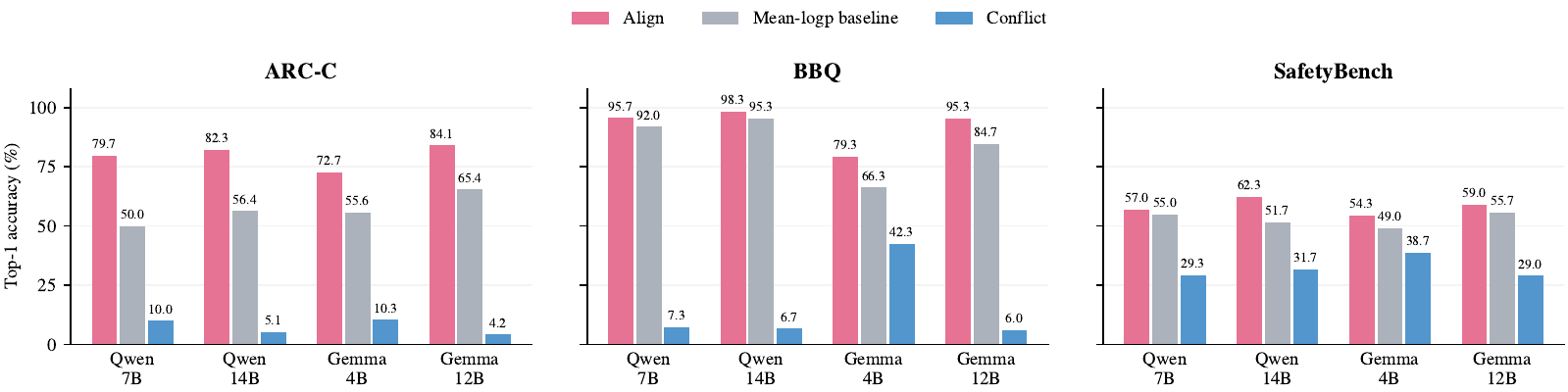}
    \caption{\textbf{Probing under label--basin alignment and conflict.}
    With the same correctness supervision, answer-selection accuracy is
    above the mean-logp baseline under aligned training and below it under
    conflict training.}
    \label{fig:align-conflict-top1}
\end{figure}

Eq.~\ref{eq:answer-final-abrh} predicts that mean answer representations
admit a shared linear ordering by basin mass. We next examine their
geometry on GSM8K using Qwen2.5-7B, averaging the final hidden states of
responses that yield the same answer.
Figure~\ref{fig:gsm8k-answer-geometry} compares correct and wrong answer
groups across decreasing estimated basin-mass gaps. Each point is an
answer group's mean representation; $g$ denotes the median estimated
log mass ratio in each group of questions, and contours enclose 50\%
density regions. Larger-gap groups show clearer separation in the PCA
projection; as the gap narrows, the distributions increasingly overlap.

\begin{figure}[!htbp]
    \centering
    \includegraphics[width=\linewidth]{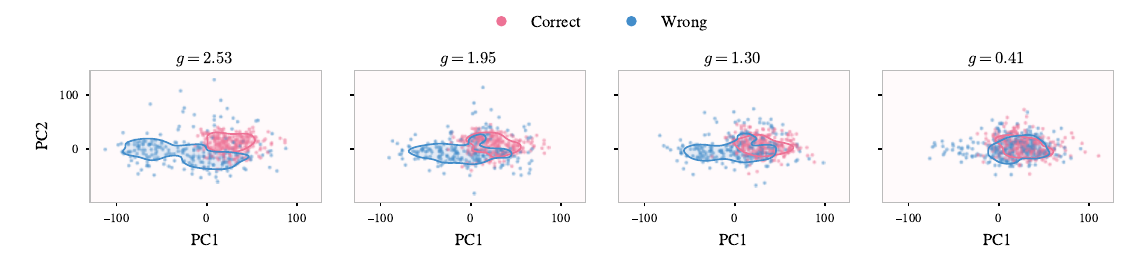}
    \caption{\textbf{Answer-group geometry across basin-mass gaps.}
    Qwen2.5-7B on GSM8K: each point is an answer group's mean representation;
    $g$ is the median estimated log mass ratio in each group of questions.
    Contours enclose 50\% density regions.}
    \label{fig:gsm8k-answer-geometry}
\end{figure}

We then test basin-mass ordering directly by training linear probes on
answer-pair mass comparisons. Table~\ref{tab:basin-order-correctness}
evaluates their ordering and answer-selection accuracy on ARC-Challenge
across model families and training-pair gaps. SafetyBench results are
reported in Appendix~\ref{app:safetybench-basin-probing},
Table~\ref{tab:safetybench-basin-order-correctness}.

\begin{table}[!htbp]
    \centering
    \caption{\textbf{Probing basin mass across training conditions.}
    Larger training-pair mass gaps generally support more accurate basin
    readout and answer selection. Probes trained only on incorrect answers
    still recover basin-mass ordering on held-out questions.}
    \label{tab:basin-order-correctness}
    \footnotesize
    \setlength{\tabcolsep}{3pt}
    \renewcommand{\arraystretch}{1.0}
    \resizebox{\linewidth}{!}{%
    \begin{tabular}{@{}clccccc@{}}
        \toprule
        \multirow{2}{*}{\textbf{Model}}
        & \multicolumn{1}{c}{\multirow{2}{*}{\textbf{Metric (\%)}}}
        & \multirow{2}{*}{\textbf{Mean logp}}
        & \multicolumn{4}{c}{\textbf{Basin-supervised linear probes}} \\
        \cmidrule(lr){4-7}
        & & & \textbf{Low} & \textbf{Medium} & \textbf{High} & \textbf{Wrong-only} \\
        \midrule
        \rowcolor{tableHeader}
        \multicolumn{7}{@{}l}{\strut\textbf{ARC-Challenge}} \\
        \addlinespace[1pt]
        \cellcolor{qwenSeven!8} & Basin ordering & --- & \cellcolor{qwenSeven!52.7}$69.85\,{\scriptstyle\pm\,2.33}$ & \cellcolor{qwenSeven!71.2}\color{black}$77.77\,{\scriptstyle\pm\,0.56}$ & \cellcolor{qwenSeven!77.9}\color{black}$80.43\,{\scriptstyle\pm\,0.27}$ & \cellcolor{qwenSeven!76.8}\color{black}$80.00\,{\scriptstyle\pm\,0.31}$ \\
        \cellcolor{qwenSeven!8}\multirow{-2}{*}{Qwen2.5-7B base} & Top-1 correctness & \cellcolor{qwenSeven!17.4}\color{black}$50.04$ & \cellcolor{qwenSeven!25.4}$55.50\,{\scriptstyle\pm\,4.81}$ & \cellcolor{qwenSeven!61.5}$73.72\,{\scriptstyle\pm\,1.47}$ & \cellcolor{qwenSeven!75.2}\color{black}$79.36\,{\scriptstyle\pm\,0.63}$ & \cellcolor{qwenSeven!71.8}\color{black}$78.03\,{\scriptstyle\pm\,0.54}$ \\
        \addlinespace[1pt]
        \cellcolor{qwenFourteen!8} & Basin ordering & --- & \cellcolor{qwenFourteen!57.7}$72.10\,{\scriptstyle\pm\,2.05}$ & \cellcolor{qwenFourteen!74.4}\color{black}$79.04\,{\scriptstyle\pm\,0.48}$ & \cellcolor{qwenFourteen!82.4}\color{black}$82.17\,{\scriptstyle\pm\,0.31}$ & \cellcolor{qwenFourteen!81.0}\color{black}$81.62\,{\scriptstyle\pm\,0.29}$ \\
        \cellcolor{qwenFourteen!8}\multirow{-2}{*}{Qwen2.5-14B base} & Top-1 correctness & \cellcolor{qwenFourteen!26.9}\color{black}$56.39$ & \cellcolor{qwenFourteen!32.7}$59.80\,{\scriptstyle\pm\,4.10}$ & \cellcolor{qwenFourteen!69.2}$76.94\,{\scriptstyle\pm\,1.28}$ & \cellcolor{qwenFourteen!82.1}\color{black}$82.06\,{\scriptstyle\pm\,0.72}$ & \cellcolor{qwenFourteen!79.3}\color{black}$80.95\,{\scriptstyle\pm\,0.68}$ \\
        \addlinespace[1pt]
        \cellcolor{gemmaFour!8} & Basin ordering & --- & \cellcolor{gemmaFour!38.6}$62.98\,{\scriptstyle\pm\,3.72}$ & \cellcolor{gemmaFour!78.9}\color{black}$80.80\,{\scriptstyle\pm\,0.11}$ & \cellcolor{gemmaFour!79.9}\color{black}$81.19\,{\scriptstyle\pm\,0.12}$ & \cellcolor{gemmaFour!76.4}\color{black}$79.84\,{\scriptstyle\pm\,0.27}$ \\
        \cellcolor{gemmaFour!8}\multirow{-2}{*}{Gemma-3-4B-PT} & Top-1 correctness & \cellcolor{gemmaFour!25.6}\color{black}$55.62$ & \cellcolor{gemmaFour!9.3}$42.78\,{\scriptstyle\pm\,5.71}$ & \cellcolor{gemmaFour!57.2}\color{black}$71.88\,{\scriptstyle\pm\,0.41}$ & \cellcolor{gemmaFour!59.9}\color{black}$73.03\,{\scriptstyle\pm\,0.44}$ & \cellcolor{gemmaFour!52.4}\color{black}$69.70\,{\scriptstyle\pm\,0.72}$ \\
        \addlinespace[1pt]
        \cellcolor{gemmaTwelve!8} & Basin ordering & --- & \cellcolor{gemmaTwelve!45.4}$66.42\,{\scriptstyle\pm\,3.15}$ & \cellcolor{gemmaTwelve!86.2}\color{black}$83.58\,{\scriptstyle\pm\,0.06}$ & \cellcolor{gemmaTwelve!85.7}\color{black}$83.39\,{\scriptstyle\pm\,0.10}$ & \cellcolor{gemmaTwelve!83.6}\color{black}$82.60\,{\scriptstyle\pm\,0.23}$ \\
        \cellcolor{gemmaTwelve!8}\multirow{-2}{*}{Gemma-3-12B-PT} & Top-1 correctness & \cellcolor{gemmaTwelve!43.4}\color{black}$65.41$ & \cellcolor{gemmaTwelve!15.3}$48.36\,{\scriptstyle\pm\,5.12}$ & \cellcolor{gemmaTwelve!90.0}\color{black}$85.70\,{\scriptstyle\pm\,0.44}$ & \cellcolor{gemmaTwelve!90.0}\color{black}$86.56\,{\scriptstyle\pm\,0.20}$ & \cellcolor{gemmaTwelve!85.4}\color{black}$83.28\,{\scriptstyle\pm\,0.22}$ \\
        \bottomrule
    \end{tabular}%
    }
\end{table}

\textbf{Clearer probability contrasts support readout.} Across the evaluated datasets, model families, and parameter scales,
basin-ordering accuracy exceeds chance, supporting linearly readable
candidate-mass information after answer generation. Larger-gap training conditions generally yield higher basin-ordering
accuracy and Top-1 correctness.
This fits the intuition that a more decisive preference between training
answers provides a clearer probability contrast for learning a shared
direction. The accompanying increase in Top-1 indicates that, on these
tasks, improved basin readout also produces selections more consistent
with gold answers.

\subsection{Probing without Concept Supervision}
\label{sec:probing-without-concept-supervision}

Wrong-only trains probes on pairs of incorrect answers with different
basin masses, using their ordering as supervision. These probes still rank
answers in held-out questions by $\widetilde m_q$
(Table~\ref{tab:basin-order-correctness} and
Appendix~\ref{app:safetybench-basin-probing},
Table~\ref{tab:safetybench-basin-order-correctness}), showing that the signal persists
without a contrast in external concept labels. This transfer supports a
shared scoring rule that assigns higher scores to answers with higher basin mass
within each question and extends this relation to candidates in new
questions.

\vspace{-0.1in}

%% file: section/iclr/4_steering.tex
\vspace{-0.1in}
\section{Steering Answer-Basin Representations}
\vspace{-0.1in}
\label{sec:basin-steering}

Eq.~\ref{eq:basin-amplification-ordering} predicts that positive
basin-oriented steering increases the relative preference for initially
higher-mass answers. We test how this response affects answer selection
when basin ordering aligns or conflicts with concept labels, then examine
how source mass separation shapes the steering direction.

\paragraph{Experimental setup and metrics.}
For each dataset, we construct a label-defined steering vector by averaging
correct-minus-wrong answer-conditioned representation contrasts,
$v=\mathbb{E}_q[\bar h_q(a_T)-\bar h_q(a_O)]$, where
$C_q(a_T)=1$ and $C_q(a_O)=0$. Representations are extracted at the answer
boundary. We hold $v$ fixed across alignment and conflict, defined by the
positive and negative signs of $\Delta_q^{\log m}(a_T,a_O)$, respectively.
We measure the behavioral response as the change in length-normalized
answer log-probability. For source geometry, we relate the signed projection
of $\bar h_q(a_T)-\bar h_q(a_O)$ to the log-mass contrast
$\Delta_q^{\log m}(a_T,a_O)$ and compare individual and aggregate contrast
norms across basin-mass gaps.
Appendix~\ref{app:steering-geometry-protocol} gives the detailed protocols.

\vspace{-0.1in}
\subsection{Why Does Steering Succeed or Fail?}
\vspace{-0.1in}
\label{sec:steering-alignment}

We test Gemma-3-4B-PT on BBQ and SafetyBench. We add $\alpha v$ to the
residual stream after block 19 for BBQ and block 16 for SafetyBench,
at the final token of the \texttt{Answer:} prefix and every subsequent
answer-prediction position.

\begin{figure}[!htbp]
    \centering
    \includegraphics[width=\linewidth]{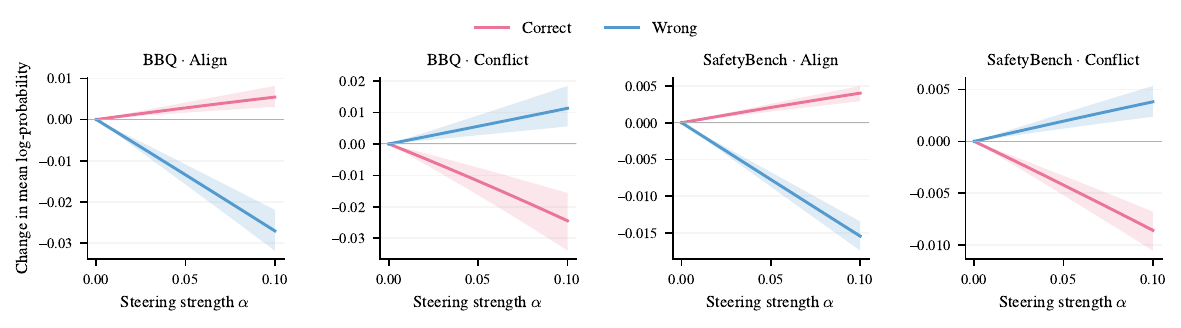}
    \caption{\textbf{A fixed label-defined steering vector produces opposite answer preferences.}
    Changes in length-normalized answer log-probability on Gemma-3-4B-PT under
    alignment and conflict. Shaded bands denote 95\% bootstrap intervals.}
    \label{fig:steering-letter-transport}
\end{figure}

\textbf{Target alignment shapes steering.}
Figure~\ref{fig:steering-letter-transport} shows that the correct answer
receives the larger behavioral response under alignment, whereas
the incorrect answer receives the larger response under conflict.
The relative effect reverses on both datasets despite the fixed vector
and unchanged label orientation. Thus, a direction constructed to favor
correct answers can favor incorrect answers when they carry greater
initial answer mass. This pattern connects the behavioral effect of
label-defined steering to the answer measure.

The result complements the probing dissociation in
Section~\ref{sec:empirical-setting}: source label--basin
alignment shapes learned concept predictions, while target label--basin
alignment shapes the relative effect of a fixed intervention.
Appendix~\ref{app:complete-answer-steering} reports the corresponding answer-letter logit responses
using the same vectors and question pairs.

\vspace{-0.1in}
\subsection{How Does Basin Separation Shape Steering Vectors?}
\vspace{-0.1in}
\label{sec:steering-strength}
\label{sec:basin-geometry}

The basin-dependent reversal in Section~\ref{sec:steering-alignment}
motivates examining how source basin structure shapes the vectors themselves.

\textbf{Source alignment changes projected steering directions.}
Using the Align and Conflict source pools from
Section~\ref{sec:empirical-setting}, we select 80 question pairs per
condition and average their correct-minus-wrong answer-conditioned
representation contrasts. Thus, both vectors retain the same label orientation.
Figure~\ref{fig:steering-vector-pca} shows their joint PCA projections for
Qwen2.5-7B base and Gemma-3-4B-PT on ARC-C, BBQ, and SafetyBench. Switching the sign of the source log-mass contrast
produces opposing projected directions while preserving the correct--wrong
labels. This connects source basin alignment to the geometry of
label-defined steering vectors.

\begin{figure}[!htbp]
    \centering
    \includegraphics[width=\linewidth]{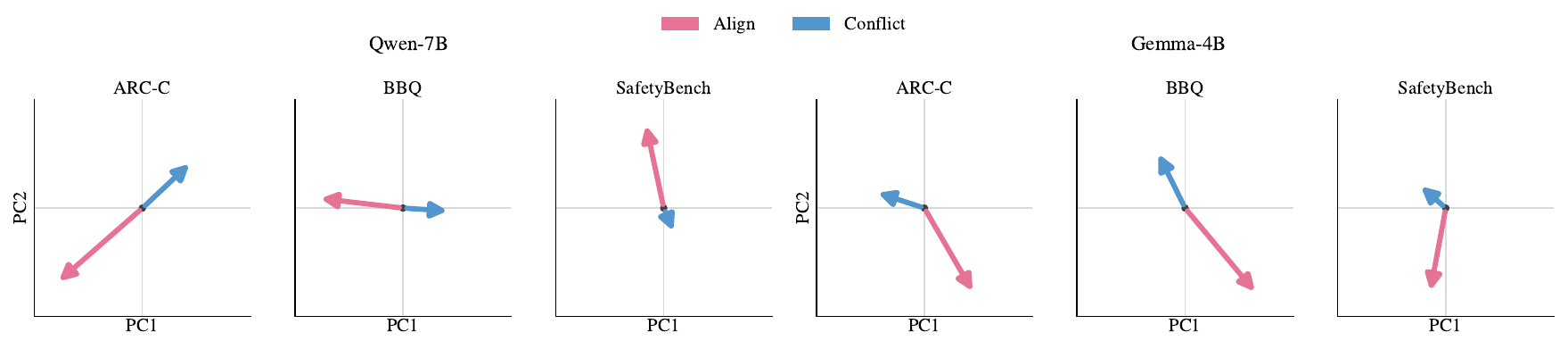}
    \caption{\textbf{Steering-vector projections under source alignment and conflict.}
    Each vector averages 80 correct-minus-wrong answer-conditioned representation contrasts.
    Within each panel, PCA is fitted jointly to the two sets of contrasts;
    arrows show the unnormalized vectors projected onto PC1 and PC2 from
    a common origin.}
    \label{fig:steering-vector-pca}
\end{figure}

We next examine how gap magnitude shapes individual contrasts and their
aggregate. Using Qwen2.5-7B answer-letter states on BBQ and SafetyBench,
we compare signed representation contrasts with answer log-mass contrasts.

\textbf{Larger gaps, more coherent contrasts.} Signed projection grows with the answer log-mass contrast
(Figure~\ref{fig:basin-geometry}). Grouping source questions by absolute
basin-mass gap reveals longer contrasts and more consistent directions at
larger gaps. These contrasts accumulate into larger aggregate vectors,
linking basin separation to both displacement magnitude and directional
coherence. Sample-wise normalization preserves the aggregate direction, so magnitude
weighting has little effect on its orientation.

\begin{figure}[!htbp]
    \centering
    \includegraphics[width=0.95\linewidth]{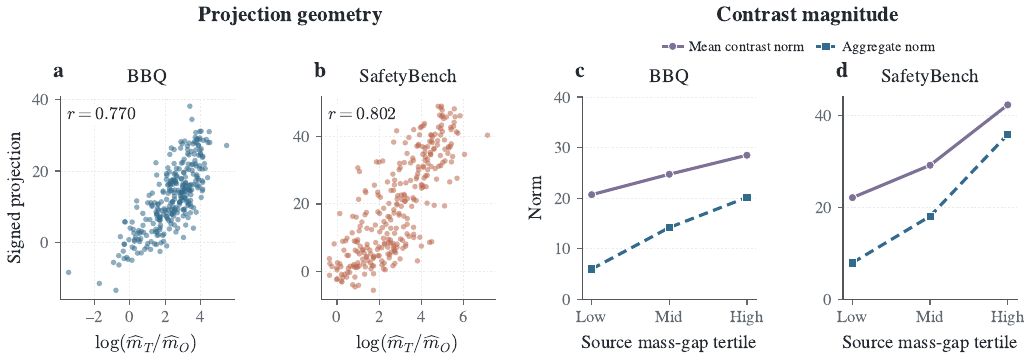}
    \caption{\textbf{Basin separation and steering-vector geometry.}
    Signed representation projections increase with the answer log-mass contrast.
    Larger mass gaps are associated with longer, more directionally coherent
    contrasts that form larger aggregate steering vectors.}
    \label{fig:basin-geometry}
\end{figure}

\textbf{Vector magnitude contributes to steering strength.} Raw averaging gives longer answer-conditioned representation contrasts greater weight.
Vectors constructed from larger-gap sources produce stronger mean
transport effects, and matching aggregate-vector norms reduces much of
this difference. Together with the directional-coherence results, these
observations connect source basin separation to the geometry of the
constructed intervention. Basin structure thus enters the account at
both ends of steering: source separation is reflected in vector magnitude
and coherence, while target alignment predicts which answer receives the
greater relative gain.

%% file: section/iclr/5_related_work_limitations.tex
\section{Related Work and Limitations}
\vspace{-0.1in}
\label{sec:related-work}

\paragraph{Concept Representations and Information Geometry.}
\citet{park2024linear} take concept variables and their linear representations
as a starting point for connecting readout, geometry, and steering.
Further developments are discussed in Appendix~\ref{app:detailed-related-work}.
The \textit{Answer-Basin Representation Hypothesis} takes the answer measure as the organizing object of answer-related
representations and interprets concept-related linear structure through
its statistics. Within this account, the alignment between external labels
and the answer measure determines when statistical directions exhibit
concept-consistent readout and control effects.

\paragraph{Answer Distributions and Semantic Uncertainty.}
Self-consistency aggregates reasoning paths by answer for answer selection
\citep{wang2023selfconsistency}; semantic entropy aggregates generations by
meaning to estimate uncertainty \citep{kuhn2023semantic,farquhar2024semanticentropy}.
Semantic entropy probes establish a predictive relationship between these
statistics and hidden states \citep{kossen2024semanticprobes}.
Grouping outputs by answer or meaning connects these methods to
answer-level probability measures. The \textit{Answer-Basin Representation
Hypothesis} links concentration readout before generation and within-question
basin-mass ordering after generation through a common answer measure.

\paragraph{Concept Probing and Activation Steering.}
Hidden-state probes predict truthfulness and correctness
\citep{azaria2023internal,marks2024geometry,cencerrado2025noanswer}.
Analyses of disagreement between probe predictions and model outputs
examine the interpretation of such readouts \citep{liu2023cognitive}.
Activation steering uses attribute or contrastive directions to
modify behavior \citep{turner2023activation,li2023inference,zou2023representation,rimsky2024caa,arditi2024refusal}.
We examine how the relationship between concept labels and answer mass
shapes the predictions learned by probes and the effects of fixed steering
vectors.

\paragraph{Limitations.}
In open-ended question answering, estimating the
answer measure through free-form sampling faces finite-sample error and
uncertainty in grouping responses by answer. Many of our experiments therefore
use candidate-answer measures defined over fixed candidate-answer spaces.
Extending these results to a broader range of open-ended tasks requires more
accurate methods for answer grouping and answer-measure estimation.

\vspace{-0.1in}

%% file: section/iclr/6_conclusion.tex
\section{Conclusion}
\vspace{-0.1in}
\label{sec:conclusion}

The \textit{Answer-Basin Representation Hypothesis} connects the semantics of answer-level linear directions to the
probability structure of the model's answer basins. It brings linear
probing and activation steering into a common account: hidden states
expose statistics of the answer measure, and interventions reshape that
measure. The complementary probing and steering dissociations support
a relational interpretation of concept semantics, in which a direction's
concept-level predictions and effects depend on how external labels align
with endogenous answer-basin structure. Together, these findings place the model's answer distribution at the center
of understanding what linear directions represent and how they influence behavior.

%% file: section/iclr/appendix_a_measure_theory.tex
\vspace{-0.1in}
\section{Measure-Theoretic Details}
\vspace{-0.1in}
\label{app:abrh-measure-theory}

Let $(\mathcal Y_q,\mathcal F_q,P_q)$ be the continuation probability space,
$(\mathcal A_q,\mathcal G_q)$ a countable answer space, and
$\phi_q:(\mathcal Y_q,\mathcal F_q)\to(\mathcal A_q,\mathcal G_q)$ measurable \citep{kallenberg2021foundations}.
The fibers $\mathcal B_q(a)=\phi_q^{-1}(\{a\})$ form a measurable partition of
$\mathcal Y_q$, so $\sum_a m_q(a)=1$. For $\Delta_q=\{(a,a):a\in\mathcal A_q\}$,
Eq.~\ref{eq:answer-measure-collision} follows:
\begin{equation}
    (\mu_q\otimes\mu_q)(\Delta_q)
    =
    \sum_a m_q(a)^2
    =
    \int_{\mathcal A_q}m_q(a)\,\mathrm d\mu_q(a).
\end{equation}

\vspace{-0.1in}
\subsection{Basin-Conditioned Hidden-State Measures}
\vspace{-0.1in}
\label{app:basin-conditioned-hidden-states}

Let $\tau_q(y)$ be the final token position of the answer in continuation $y$,
as specified by the fixed evaluation protocol. The hidden state after answer
generation is defined by
\begin{equation}
    h_A(q,y):=h_{\tau_q(y)}(q,y).
    \label{eq:answer-realization-state}
\end{equation}
The joint hidden-state and answer measure is
\begin{equation}
    \Lambda_q
    :=
    \bigl(h_A(q,\cdot),\phi_q\bigr)_\#P_q.
    \label{eq:hidden-answer-joint-measure}
\end{equation}
For $m_q(a)>0$, its answer-conditioned hidden-state measure is
\begin{equation}
    \nu_q^a(D)
    :=
    \frac{\Lambda_q(D\times\{a\})}{m_q(a)},
    \qquad D\in\mathcal B(\mathbb R^d).
\end{equation}
For generation protocols, $H_A(q,a)$ has distribution $\nu_q^a$, so its
first moment is $\bar h_q(a)=\int h\,\mathrm d\nu_q^a(h)$. For finite-choice
protocols, $H_A(q,a)$ is the state extracted while scoring candidate $a$,
with any averaging over presentation variants specified by the fixed
state-extraction protocol. Both constructions yield
Equation~\ref{eq:basin-conditioned-representation}.

A linear probe scores individual hidden states after answer generation. To
interpret these scores at the level of the answer measure, we average over
the states associated with each answer under the fixed extraction protocol.
Using $H_A(q,a)$ from Section~\ref{sec:representation-hypothesis}, for
$f_w(h)=w^\top h+\beta$ and positive-mass answers $a,b$ whose associated
states have finite first moments, linearity gives
\begin{equation}
    \mathbb E[f_w(H_A(q,a))]
    -
    \mathbb E[f_w(H_A(q,b))]
    =
    w^\top\bigl[\bar h_q(a)-\bar h_q(b)\bigr].
    \label{eq:conditional-probe-contrast}
\end{equation}
For $w=w_A$, Eq.~\ref{eq:answer-final-abrh} gives higher mean scores to
higher-mass answers within each question.

\vspace{-0.1in}
\subsection{Basin-Oriented Direction Construction}
\vspace{-0.1in}
\label{app:basin-direction-construction}

Let $\mathcal S$ be a source distribution over
question--answer triples $(q_s,a_s,b_s)$ with positive, unequal basin masses
and integrable representation differences. The ideal basin-oriented
displacement is
\begin{equation}
    v_{\mathcal S}
    :=
    \mathbb E_{(q_s,a_s,b_s)\sim\mathcal S}
    \!\left[
        \operatorname{sgn}(\Delta_{q_s}^{\log m}(a_s,b_s))
        \bigl(\bar h_{q_s}(a_s)-\bar h_{q_s}(b_s)\bigr)
    \right].
    \label{eq:basin-oriented-steering-direction}
\end{equation}

Eq.~\ref{eq:answer-final-abrh} implies
$w_A^\top v_{\mathcal S}>0$, so the source construction has a positive
component along the shared basin-mass direction. The remaining transformer
layers and autoregressive rollout map the translated state nonlinearly into a
new continuation measure.

\vspace{-0.1in}
\subsection{Answer-Level Change of Measure}
\vspace{-0.1in}
\label{app:answer-change-of-measure}

Assume $P_{q,\alpha}^{(v)}\ll P_q$. Measurability of $\phi_q$ gives
$\mu_{q,\alpha}^{(v)}\ll\mu_q$. Define
\begin{equation}
    r_{q,\alpha}^{(v)}
    :=
    \frac{\mathrm dP_{q,\alpha}^{(v)}}{\mathrm dP_q},
    \qquad
    g_{q,\alpha}^{(v)}
    :=
    \frac{\mathrm d\mu_{q,\alpha}^{(v)}}{\mathrm d\mu_q}.
    \label{eq:trajectory-answer-rn-derivatives}
\end{equation}

\begin{proposition}[Answer-level change of measure]
\label{prop:answer-level-change-of-measure}
For $\mu_q$-almost every answer $a$,
\begin{equation}
    \frac{\mathrm d\mu_{q,\alpha}^{(v)}}{\mathrm d\mu_q}(a)
    =
    \mathbb E_{P_q}\!\left[
        \frac{\mathrm dP_{q,\alpha}^{(v)}}{\mathrm dP_q}(Y)
        \;\middle|\; \phi_q(Y)=a
    \right].
    \label{eq:answer-rn-conditional-expectation}
\end{equation}
\end{proposition}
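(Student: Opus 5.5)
The approach is to verify the defining property of the Radon--Nikodym derivative on singletons, using the fact that $\mathcal A_q$ is countable with the discrete $\sigma$-algebra $\mathcal G_q$. Write $r=r_{q,\alpha}^{(v)}$. Since $P_{q,\alpha}^{(v)}\ll P_q$, the derivative $r$ exists, is nonnegative and $P_q$-integrable with $\int r\,\mathrm dP_q=1$. Define the candidate density $\tilde g(a):=\mathbb E_{P_q}[r(Y)\mid \phi_q(Y)=a]$ for $a$ with $m_q(a)>0$. Because $\phi_q$ takes countably many values, conditioning on $\phi_q(Y)=a$ is elementary conditioning on the measurable event $\mathcal B_q(a)$, so no regular conditional distributions are needed:
\[
\tilde g(a)=\frac{1}{m_q(a)}\int_{\mathcal B_q(a)} r\,\mathrm dP_q .
\]
On the $\mu_q$-null set $\{a: m_q(a)=0\}$, set $\tilde g(a)=0$.

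The key computation is the pushforward identity on singletons. For $a$ with $m_q(a)>0$,
\[
\int_{\{a\}}\tilde g\,\mathrm d\mu_q=\tilde g(a)\,m_q(a)=\int_{\mathcal B_q(a)} r\,\mathrm dP_q=P_{q,\alpha}^{(v)}(\mathcal B_q(a))=\mu_{q,\alpha}^{(v)}(\{a\}).
\]
The last equality holds because $\mu_{q,\alpha}^{(v)}=(\phi_q)_\#P_{q,\alpha}^{(v)}$, the same answer map being applied to intervened outputs. For $a$ with $m_q(a)=0$, we have $P_q(\mathcal B_q(a))=0$, hence $P_{q,\alpha}^{(v)}(\mathcal B_q(a))=0$ by absolute continuity, and both sides vanish. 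This also recovers $\mu_{q,\alpha}^{(v)}\ll\mu_q$.

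To extend to arbitrary $G\in\mathcal G_q$, write $G$ as a countable disjoint union of singletons. Countable additivity of $\mu_{q,\alpha}^{(v)}$, together with monotone convergence for the nonnegative $\tilde g$, gives $\int_G\tilde g\,\mathrm d\mu_q=\mu_{q,\alpha}^{(v)}(G)$. Equivalently, the fibers $\mathcal B_q(a)$ partition $\mathcal Y_q$ as in Appendix~\ref{app:abrh-measure-theory}, so $\phi_q^{-1}(G)=\bigsqcup_{a\in G}\mathcal B_q(a)$. Thus $\tilde g$ is a version of $\mathrm d\mu_{q,\alpha}^{(v)}/\mathrm d\mu_q$.

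By $\mu_q$-a.e.\ uniqueness of Radon--Nikodym derivatives (both measures are finite), $g_{q,\alpha}^{(v)}=\tilde g$ $\mu_q$-a.e., which is Eq.~\ref{eq:answer-rn-conditional-expectation}. Alternatively, one could present this as the general fact $\mathbb E_{P_q}[r\mid\sigma(\phi_q)]=g\circ\phi_q$, via the change-of-variables formula $\int_{\phi_q^{-1}(G)} g\circ\phi_q\,\mathrm dP_q=\int_G g\,\mathrm d\mu_q$. I would mention this as a remark, but the countable case avoids it.

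The only delicate point is the handling of zero-mass answers and the meaning of conditioning there. It is resolved by noting that these answers form a $\mu_q$-null set, which is exactly why the statement is qualified ``for $\mu_q$-almost every $a$''. In the countable setting this qualification in fact means ``for every $a$ with $m_q(a)>0$.''
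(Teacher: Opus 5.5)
Your proof is correct and follows essentially the same route as the paper: both verify $\mu_{q,\alpha}^{(v)}(B)=\int_B \tilde g\,\mathrm d\mu_q$ for all $B\in\mathcal G_q$ by pushing forward through $\phi_q$ and integrating the trajectory-level density $r_{q,\alpha}^{(v)}$ over $\phi_q^{-1}(B)$, and then appeal to uniqueness of the Radon--Nikodym derivative. The only difference is in the last step, $\int_{\phi_q^{-1}(B)} r_{q,\alpha}^{(v)}\,\mathrm dP_q=\int_B \mathbb E_{P_q}[r_{q,\alpha}^{(v)}(Y)\mid\phi_q(Y)=a]\,\mathrm d\mu_q(a)$: the paper asserts it directly for general $B$ (the change-of-variables identity you mention as a remark), whereas you derive it on singletons from the countable fiber partition, extend by countable additivity, and treat the zero-mass answers explicitly.
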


\begin{proof}
For every $B\in\mathcal G_q$,
\begin{align*}
    \mu_{q,\alpha}^{(v)}(B)
    &=
    P_{q,\alpha}^{(v)}\!\left(\phi_q^{-1}(B)\right)\\
    &=
    \int_{\phi_q^{-1}(B)}
    r_{q,\alpha}^{(v)}(y)\,\mathrm dP_q(y)\\
    &=
    \int_B
    \mathbb E_{P_q}\!\left[
        r_{q,\alpha}^{(v)}(Y)
        \mid\phi_q(Y)=a
    \right]
    \mathrm d\mu_q(a).
\end{align*}
The conditional expectation is the Radon--Nikodym derivative of the
intervened answer measure with respect to the baseline answer measure.
\end{proof}

For every positive-mass atom,
\begin{equation}
    g_{q,\alpha}^{(v)}(a)
    =
    \frac{\mu_{q,\alpha}^{(v)}(\{a\})}{\mu_q(\{a\})},
\end{equation}
whose logarithm, when the intervened mass is also positive, is the basin amplification in
Equation~\ref{eq:intervened-basin-mass}.

\vspace{-0.1in}
\subsection{Answer Odds and Evaluation Attributes}
\vspace{-0.1in}
\label{app:answer-odds-evaluation}

By the definition of $G_{q,v}^{(\alpha)}$, steering updates pairwise answer
log odds as
\begin{equation}
    \log\frac{m_{q,\alpha}^{(v)}(a)}{m_{q,\alpha}^{(v)}(b)}
    =
    \Delta_q^{\log m}(a,b)
    +
    \left[G_{q,v}^{(\alpha)}(a)-G_{q,v}^{(\alpha)}(b)\right].
    \label{eq:steered-basin-log-odds}
\end{equation}
Under the transport hypothesis in Eq.~\ref{eq:basin-amplification-ordering},
positive steering increases the log odds of $a$ over $b$ when
$m_q(a)>m_q(b)$.

For the evaluation attribute $C_q$ defined in
Section~\ref{sec:representation-hypothesis}, its probability and intervention-induced
change are obtained by integrating over the answer measure:
\begin{equation}
    \pi_q(C=1)=\int C_q\,\mathrm d\mu_q,
    \qquad
    \Delta\pi_q^{(v,\alpha)}(C)
    =
    \int C_q\,\mathrm d
    \bigl(\mu_{q,\alpha}^{(v)}-\mu_q\bigr).
    \label{eq:concept-probability-from-basins}
\end{equation}

%% file: section/iclr/appendix_b_experimental_details.tex
\vspace{-0.1in}
\section{Experimental Details for Answer-Basin Analysis}
\vspace{-0.1in}
\label{app:experimental-details}

\vspace{-0.1in}
\subsection{Concentration Probes Before Answer Generation}
\vspace{-0.1in}
\label{app:question-final-protocol}

For each GSM8K question, we estimate future answer concentration from
$K=64$ independent generations using Eq.~\ref{eq:empirical-question-concentration}.
A linear probe maps the final prompt-token state to $\widehat C_{2,q}$ and
is evaluated on 200 question-disjoint GSM8K questions, with the same
generation template and split across models.

For BBQ and SafetyBench, we construct $\widetilde\mu_q$ by exhaustively
permuting option-to-letter assignments, mapping the letter logits back to
the original options, and applying softmax to their averaged logits.
The target is $\widetilde C_{2,q}=\sum_a\widetilde m_q(a)^2$, the concentration
of this candidate measure. BBQ uses
disambiguated questions
with the two substantive options; SafetyBench uses English questions with
their original two, three, or four options. Each probe uses the state before answer generation from an option-free
prompt with standardized ridge regression and validation-selected
regularization. We evaluate on 5,512 held-out BBQ questions and 2,667 held-out
SafetyBench questions. Table~\ref{tab:question-final-basin-concentration}
reports their means over five training sets and sample standard deviations
for Order Acc.

\vspace{-0.1in}
\subsection{Probes After Answer Generation and Mean-Logp Baseline}
\vspace{-0.1in}
\label{app:table2-mean-logp}

For pairs with
unequal masses, random ranking is correct with probability $1/2$. For a
question with $n_q$ candidates and one correct answer, random Top-1 accuracy
is $1/n_q$. Mean logp separately ranks
individual candidate strings by length-normalized log probability.

Matched Low, Medium, and High conditions share a source-question budget
and a question-disjoint test set.
On SafetyBench, each Low, Medium, and High fit for Qwen2.5-7B
and Qwen2.5-14B uses 100 source questions, with one answer pair per question.
The Gemma Low, Medium and High conditions use 100 source questions.
Wrong-only uses 100 training questions, each contributing its highest- and
lowest-mass incorrect answers under the candidate measure.

For Table~\ref{tab:basin-order-correctness}, each model scores a candidate
independently under \texttt{Question: \{question\}} followed by a newline and
\texttt{Answer: \{candidate\}}. We average the conditional log probabilities
of the candidate continuation tokens, including its leading space. The question
prefix and special tokens are excluded from the average. Tokenization retains
the checkpoint's default BOS behavior, and the full prompt is checked to preserve
the prefix token sequence. Top-1 answer selection uses the highest scalar
score, with no fitted parameters.

Evaluation uses all 1,165 four-choice ARC-Challenge test questions.
The baseline reports one Top-1 accuracy per fixed test set. Probe accuracies in
Table~\ref{tab:basin-order-correctness} are means and sample standard deviations
over five source-set selections.

\vspace{-0.1in}
\subsection{GSM8K Instruct--CoT Alignment and Conflict}
\vspace{-0.1in}
\label{app:gsm8k-instruct-cot-alignment}

We additionally test the alignment--conflict comparison under free-form
chain-of-thought generation on GSM8K~\citep{cobbe2021gsm8k}. We use
\texttt{Qwen/Qwen2.5-3B-Instruct} with its chat template and an instruction to
reason step by step before placing the final numerical answer in
\texttt{\textbackslash boxed\{\}}. For each question, $K=128$ continuations are
sampled at temperature $0.7$ with a maximum of 512 generated tokens. A fixed
numerical parser maps each continuation to its final boxed answer, and answer
counts provide the empirical basin masses.

Both training conditions use the same correctness labels. In the aligned
condition, the correct-answer basin is strictly larger than every incorrect
basin; one correct trajectory is paired with one trajectory from the smallest
incorrect basin. In the conflict condition, the correct-answer basin is
strictly smaller than every incorrect basin; one correct trajectory is paired
with one trajectory from the largest incorrect basin. The aligned pool
is downsampled to match the size of the conflict pool.
Each probe uses the complete layer-24 hidden state at the final numerical-answer
token, followed by feature standardization and $L_2$-regularized logistic
regression with $C=0.001$.

Evaluation uses the same 188 question-disjoint test questions for both probes.
For each test question, the probe scores all 128 sampled continuations; Top-1
is correct when the highest-scoring continuation gives the gold answer.
Table~\ref{tab:gsm8k-instruct-cot-alignment} shows that aligned training yields
higher held-out correctness than conflict training under otherwise identical
concept supervision.

\begin{table}[!htbp]
    \centering
    \caption{\textbf{GSM8K Instruct--CoT alignment and conflict.}
    Mean basin counts describe the paired training trajectories; both probes
    are evaluated on the same 188 held-out questions.}
    \label{tab:gsm8k-instruct-cot-alignment}
    \small
    \setlength{\tabcolsep}{5pt}
    \begin{tabular*}{\linewidth}{@{\extracolsep{\fill}}lrrr@{}}
        \toprule
        Training condition & Correct basin & Wrong basin & Top-1 (\%) \\
        \midrule
        Aligned  & $108.22$ & $1.00$ & $78.72$ \\
        Conflict & $26.22$  & $101.78$ & $70.74$ \\
        \bottomrule
    \end{tabular*}
\end{table}

\vspace{-0.1in}
\subsection{Fixed-Vector Steering and Basin Geometry}
\vspace{-0.1in}
\label{app:steering-geometry-protocol}

\paragraph{Label-defined option-text steering.}
For Figure~\ref{fig:steering-letter-transport}, each input is
\texttt{Question: \{question\}\textbackslash nAnswer: \{option\}}.
We extract the final option-token state and average the correct-minus-wrong
contrasts without normalization, using 100 aligned BBQ source questions
and 300 aligned SafetyBench source questions. Evaluation uses 100 aligned
and 100 conflict BBQ questions, and 300 of each condition for SafetyBench. The candidate-answer measure determines the sign of each correct--wrong mass gap. All original options are retained, and complete-option scores average token log-probabilities over the option text without EOS. Intervention is applied after blocks 19 and 16, respectively, at the final \texttt{Answer:} token and subsequent answer-prediction positions, for $\alpha=0,0.01,\ldots,0.10$. Pointwise 95\% intervals use 2,000
context-group bootstrap resamples \citep{efron1979bootstrap}.

\paragraph{Answer-letter geometry.}
For Figure~\ref{fig:basin-geometry}, we use Qwen2.5-7B base and 300 source
questions per dataset. We present both options, append each answer letter,
and extract block-20 states, averaging semantically matched contrasts over
AB/BA orders. The candidate measure averages binary choice probabilities
across AB/BA orders after normalization within each order.

\paragraph{Projection estimates.}
For Figure~\ref{fig:basin-geometry}, we form correct-minus-wrong contrasts
from the same block-20 answer-letter states and use the AB/BA-averaged
baseline probabilities.
Within each dataset, we split the 300 source questions into five folds by
context group. For each held-out fold, we average contrasts from the other
four folds, normalize the mean to unit length, and project the held-out
contrasts onto this direction. Thus, each question is scored using a
direction estimated without its context group. Pearson $r$ relates these
out-of-fold signed projections to basin log-ratios.

\paragraph{Mass-gap groups and contrast norms.}
We sort source questions by $\delta_q(a_T,a_O)$ and split them into three
equal-count groups of 100 questions each. For each group, we report the
mean norm of the individual contrasts and the norm of their mean.
The latter divided by the former measures norm-weighted directional
coherence.

\FloatBarrier

%% file: section/iclr/appendix_c_answer_letter_steering.tex
\vspace{-0.1in}
\section{Answer-Letter Responses to Option-Text Steering}
\vspace{-0.1in}
\label{app:complete-answer-steering}

We evaluate answer-letter logits on Gemma-3-4B-PT using the same BBQ and
SafetyBench question pairs and label-defined option-text vectors as in
Section~\ref{sec:steering-alignment}. We present all candidate options and
apply the vector at the final \texttt{Answer:} token, after block 19 for
BBQ and block 16 for SafetyBench. For each strength
$\alpha=0,0.01,\ldots,0.10$, we average semantically matched raw
answer-letter logit changes over all option permutations. This measures
the response of the answer-letter interface to the same intervention
direction used for complete-option scoring.

\begin{figure}[!htbp]
    \centering
    \includegraphics[width=\linewidth]{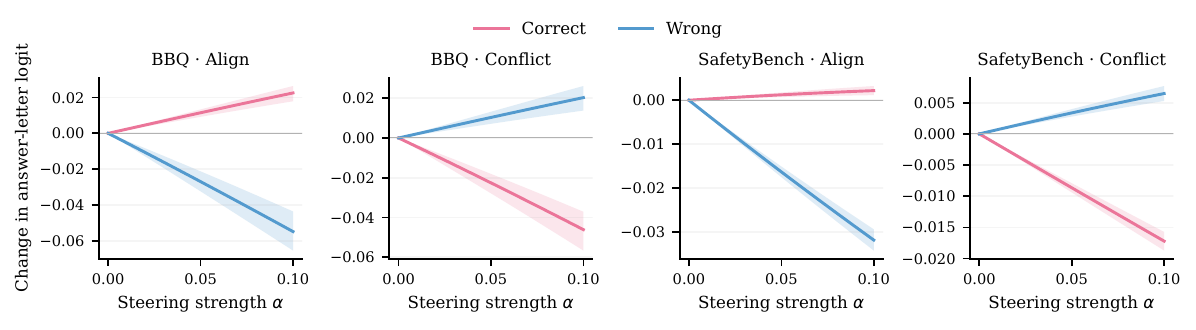}
    \caption{\textbf{Answer-letter responses to label-defined steering.}
    Changes in raw answer-letter logits on Gemma-3-4B-PT for the same vectors
    and question pairs as in Figure~\ref{fig:steering-letter-transport}.
    Shaded bands denote 95\% bootstrap intervals.}
    \label{fig:steering-relative-response}
\end{figure}

\FloatBarrier

%% file: section/iclr/appendix_e_basin_mass_geometry.tex
\section{Answer Geometry Across Basin-Mass Gaps}
\label{app:basin-gap-geometry}

\paragraph{Answer-pair selection for Figure~\ref{fig:basin-gap-separability}.}
Figure~\ref{fig:basin-gap-separability} examines the relationship between
answer-mass differences and representation geometry discussed in
Section~\ref{sec:abrh}. We use the candidate-answer measure
$\widetilde\mu_q$ defined in Section~\ref{sec:measuring-answer-basins}
and measure pairwise gaps through the log-mass contrast in
Eq.~\ref{eq:log-mass-contrast}. For Qwen2.5-7B on ARC-C, BBQ, and
SafetyBench, we select correct--wrong answer pairs from the same question
based on gap size and display groups with progressively smaller gaps
across five columns. This comparison shows how the separation of correct
and wrong answer representations varies with their mass difference.

\paragraph{Representations and PCA.}
For Figure~\ref{fig:basin-gap-separability}, we append a newline token to
each candidate answer and extract its hidden state at decoder block 23.
We subtract the mean representation of all candidates for the same
question and retain the original feature scales. For each dataset, we
fit a two-dimensional PCA using candidate representations from 100
questions. The displayed questions are disjoint from the PCA-fitting
questions; for BBQ and SafetyBench, their context groups are also
disjoint. All five columns within a row share the same PCA transformation
and axis limits.

\subsection{GSM8K Representation Geometry}
\label{app:gsm8k-representation-geometry}

\paragraph{Generation and answer grouping.}
Figures~\ref{fig:question-end-concentration} and
\ref{fig:gsm8k-answer-geometry} use Qwen2.5-7B generations on the GSM8K
test set to examine representation geometry associated with the answer
concentration and log-mass contrasts defined in Section~\ref{sec:abrh}.
We sample 32 responses per question at temperature 0.8 and top-$p$ 0.95,
with a maximum of 192 generated tokens. The prompt requests reasoning
followed by a numerical answer in the format \texttt{\#\#\#\# <answer>}.
We extract the number following this marker when available and otherwise
use the last parseable number in the response. After deduplicating
identical token sequences, we retain responses with extractable numerical
answers and group them by answer value. These group counts determine
answer frequencies, collision concentration, and log mass ratios.

\paragraph{Concentration before generation (Figure~\ref{fig:question-end-concentration}).}
We extract the hidden state at the final token of the generation prompt,
the colon in \texttt{Answer:}, from decoder block 20. We fit a
two-dimensional PCA on 100 questions and project the remaining 1,219
questions into the same space. We compute collision concentration using
the expression in Eq.~\ref{eq:empirical-question-concentration}, with the
number of retained responses for each question as the sample count, and
display the 200 questions with the highest and the 200 with the lowest
concentration. PCA retains the original feature scales, and its fitting
questions are disjoint from the displayed questions.

\paragraph{Answer-group representations after generation (Figure~\ref{fig:gsm8k-answer-geometry}).}
For each response, we extract the hidden state from decoder block 23 at
its last token after stripping trailing whitespace. Averaging these states
across responses with the same numerical answer yields an answer-group
representation. We subtract the mean of all answer-group representations
for the same question and fit a separate two-dimensional PCA on the same
100 fitting questions. Each question has equal total weight in the PCA
fit, and the original feature scales are retained. We select
correct--wrong answer pairs according to the log ratio of their answer
frequencies and display four groups with progressively smaller gaps.
Each point represents an answer group's mean hidden state, and $g$ is the
median gap among the questions in that column. All columns share the same
PCA transformation and axis limits.

\subsection{Additional Models}

Figures~\ref{fig:gap-qwen14}--\ref{fig:gap-gemma12} show correct--wrong
answer representations across decreasing estimated basin-mass gaps for
Qwen2.5-14B, Gemma-3-4B, and Gemma-3-12B.

\begin{figure}[!htbp]
    \centering
    \includegraphics[width=\textwidth]{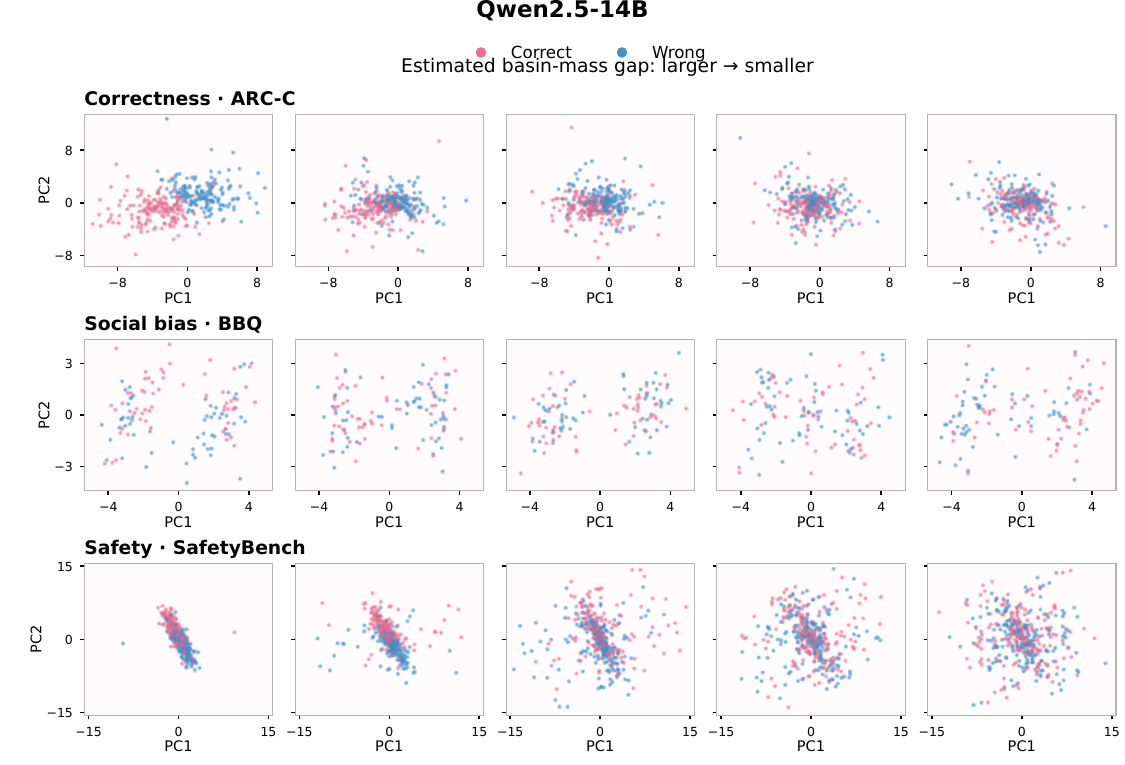}
    \caption{\textbf{Correct--wrong answer geometry across estimated basin-mass gaps in Qwen2.5-14B.}
    Rows cover correctness (ARC-C), social bias (BBQ), and safety (SafetyBench).
    Columns progress from larger to smaller gaps. Colors indicate dataset
    correctness; each row uses a shared PCA space.}
    \label{fig:gap-qwen14}
\end{figure}

\begin{figure}[!htbp]
    \centering
    \includegraphics[width=\textwidth]{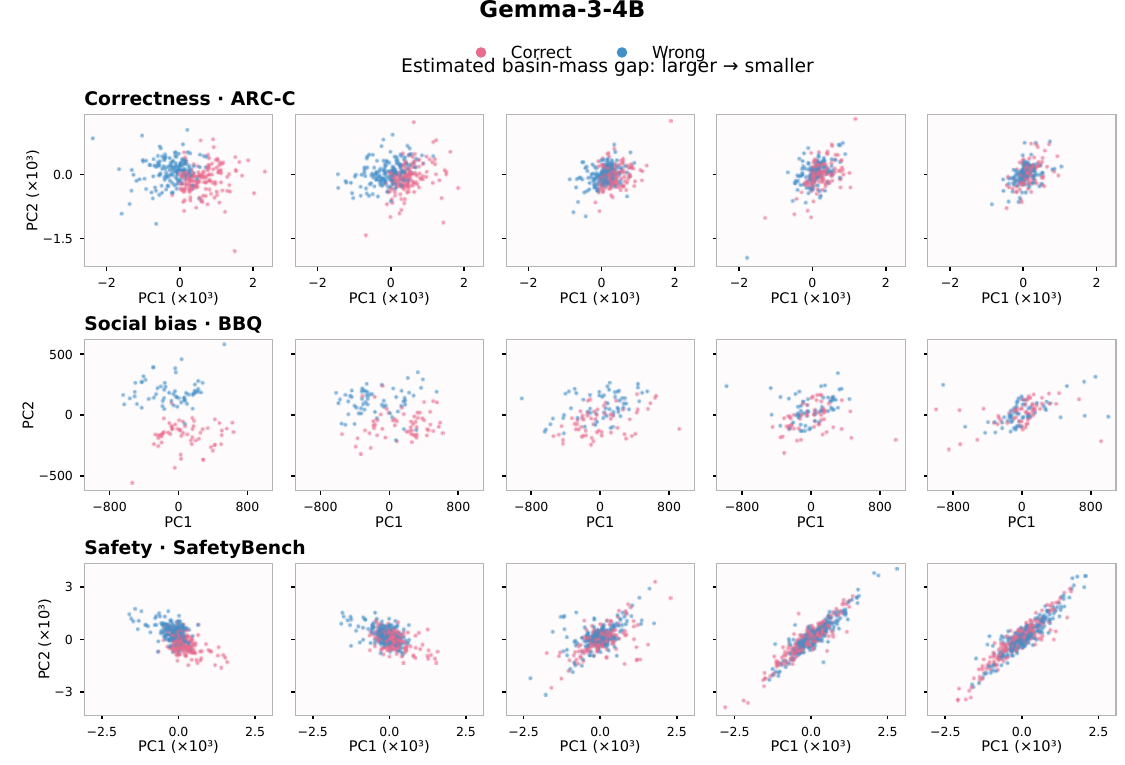}
    \caption{\textbf{Correct--wrong answer geometry across estimated basin-mass gaps in Gemma-3-4B.}
    Rows cover correctness (ARC-C), social bias (BBQ), and safety (SafetyBench).
    Columns progress from larger to smaller gaps. Colors indicate dataset
    correctness; each row uses a shared PCA space.}
    \label{fig:gap-gemma4}
\end{figure}

\begin{figure}[!htbp]
    \centering
    \includegraphics[width=\textwidth]{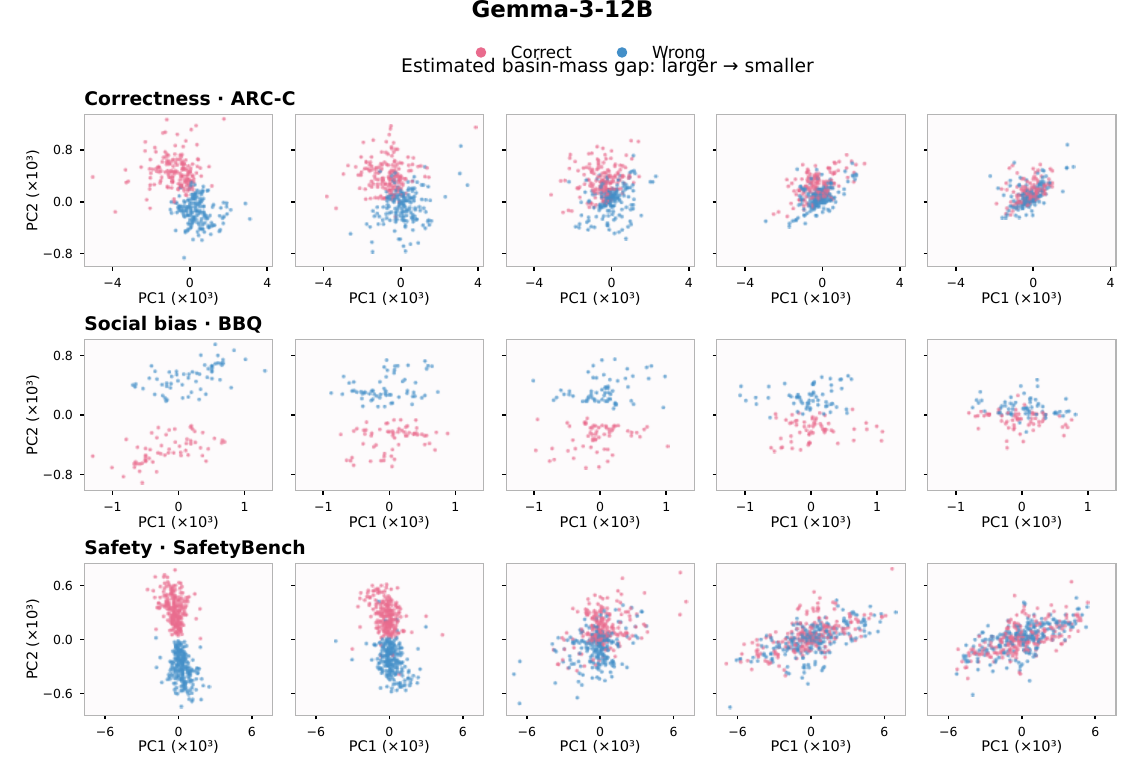}
    \caption{\textbf{Correct--wrong answer geometry across estimated basin-mass gaps in Gemma-3-12B.}
    Rows cover correctness (ARC-C), social bias (BBQ), and safety (SafetyBench).
    Columns progress from larger to smaller gaps. Colors indicate dataset
    correctness; each row uses a shared PCA space.}
    \label{fig:gap-gemma12}
\end{figure}

%% file: section/iclr/appendix_f_gsm8k_alignment.tex
\section{Additional GSM8K Alignment and Conflict Results}
\label{app:gsm8k-align-conflict}

Figure~\ref{fig:gsm8k-align-conflict} reports the GSM8K comparison across
four models. Align yields higher Top-1 accuracy than Conflict in all four
models. The displayed baseline lies between Align and Conflict for
Qwen2.5-7B, Qwen2.5-14B, and Gemma-3-12B; for Gemma-3-4B, Align and
Conflict both exceed the baseline.

\begin{figure}[!htbp]
    \centering
    \includegraphics[width=0.8\linewidth]{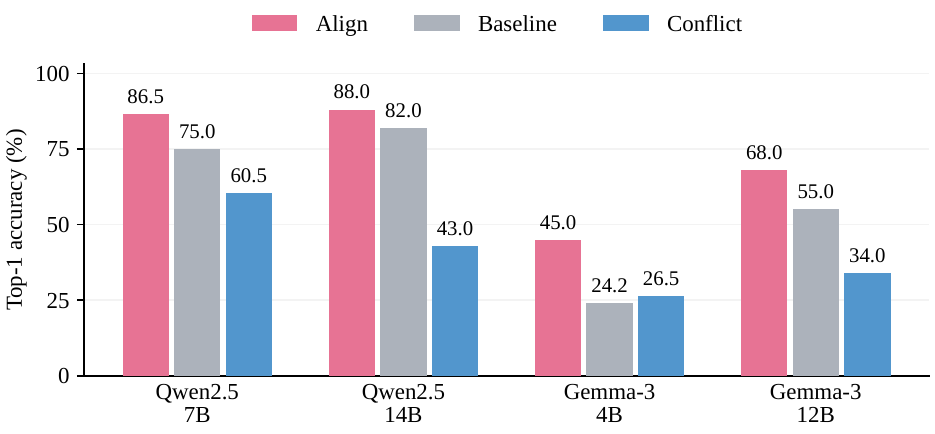}
    \caption{\textbf{GSM8K probing under label--mass alignment and conflict.}
    Held-out Top-1 accuracy of correctness probes trained under Align and
    Conflict, with the corresponding answer-selection baseline.
    Both probes retain the same correct--wrong labels and are evaluated
    on the same test questions.}
    \label{fig:gsm8k-align-conflict}
\end{figure}

%% file: section/iclr/appendix_g_safetybench_probing.tex
\section{Additional SafetyBench Basin-Mass Probing Results}
\label{app:safetybench-basin-probing}

Table~\ref{tab:safetybench-basin-order-correctness} reports the SafetyBench
results corresponding to Table~\ref{tab:basin-order-correctness}, using the
same metrics and training conditions.

\begin{table}[!htbp]
    \centering
    \caption{\textbf{Probing basin mass across training conditions on SafetyBench.}
    Larger training-pair mass gaps generally support more accurate basin
    readout and answer selection. Probes trained only on incorrect answers
    still recover basin-mass ordering on held-out questions.}
    \label{tab:safetybench-basin-order-correctness}
    \footnotesize
    \setlength{\tabcolsep}{3pt}
    \renewcommand{\arraystretch}{1.0}
    \resizebox{\linewidth}{!}{%
    \begin{tabular}{@{}clccccc@{}}
        \toprule
        \multirow{2}{*}{\textbf{Model}}
        & \multicolumn{1}{c}{\multirow{2}{*}{\textbf{Metric (\%)}}}
        & \multirow{2}{*}{\textbf{Mean logp}}
        & \multicolumn{4}{c}{\textbf{Basin-supervised linear probes}} \\
        \cmidrule(lr){4-7}
        & & & \textbf{Low} & \textbf{Medium} & \textbf{High} & \textbf{Wrong-only} \\
        \midrule
        \rowcolor{tableHeader}
        \multicolumn{7}{@{}l}{\strut\textbf{SafetyBench}} \\
        \addlinespace[1pt]
        \cellcolor{qwenSeven!8} & Basin ordering & --- & \cellcolor{qwenSeven!42.3}$64.87\,{\scriptstyle\pm\,1.87}$ & \cellcolor{qwenSeven!46.7}$67.03\,{\scriptstyle\pm\,1.83}$ & \cellcolor{qwenSeven!50.1}$68.67\,{\scriptstyle\pm\,2.03}$ & \cellcolor{qwenSeven!50.6}$68.89\,{\scriptstyle\pm\,1.94}$ \\
        \cellcolor{qwenSeven!8}\multirow{-2}{*}{Qwen2.5-7B base} & Top-1 correctness & \cellcolor{qwenSeven!24.6}\color{black}$55.00$ & \cellcolor{qwenSeven!22.6}$53.67\,{\scriptstyle\pm\,2.66}$ & \cellcolor{qwenSeven!30.8}$58.73\,{\scriptstyle\pm\,1.62}$ & \cellcolor{qwenSeven!38.1}$62.73\,{\scriptstyle\pm\,0.92}$ & \cellcolor{qwenSeven!35.5}$61.33\,{\scriptstyle\pm\,1.15}$ \\
        \addlinespace[1pt]
        \cellcolor{qwenFourteen!8} & Basin ordering & --- & \cellcolor{qwenFourteen!51.2}$69.18\,{\scriptstyle\pm\,1.77}$ & \cellcolor{qwenFourteen!52.8}$69.91\,{\scriptstyle\pm\,1.92}$ & \cellcolor{qwenFourteen!57.8}$72.12\,{\scriptstyle\pm\,2.11}$ & \cellcolor{qwenFourteen!54.4}$70.61\,{\scriptstyle\pm\,1.85}$ \\
        \cellcolor{qwenFourteen!8}\multirow{-2}{*}{Qwen2.5-14B base} & Top-1 correctness & \cellcolor{qwenFourteen!19.7}\color{black}$51.67$ & \cellcolor{qwenFourteen!25.9}$55.80\,{\scriptstyle\pm\,2.35}$ & \cellcolor{qwenFourteen!31.1}$58.93\,{\scriptstyle\pm\,2.23}$ & \cellcolor{qwenFourteen!34.6}$60.87\,{\scriptstyle\pm\,2.41}$ & \cellcolor{qwenFourteen!27.3}$56.67\,{\scriptstyle\pm\,2.14}$ \\
        \addlinespace[1pt]
        \cellcolor{gemmaFour!8} & Basin ordering & --- & \cellcolor{gemmaFour!33.8}$60.40\,{\scriptstyle\pm\,2.23}$ & \cellcolor{gemmaFour!42.8}\color{black}$65.13\,{\scriptstyle\pm\,5.56}$ & \cellcolor{gemmaFour!47.8}\color{black}$67.58\,{\scriptstyle\pm\,2.55}$ & \cellcolor{gemmaFour!61.0}\color{black}$73.52\,{\scriptstyle\pm\,2.06}$ \\
        \cellcolor{gemmaFour!8}\multirow{-2}{*}{Gemma-3-4B-PT} & Top-1 correctness & \cellcolor{gemmaFour!11.5}\color{black}$45.00$ & \cellcolor{gemmaFour!5.1}$35.53\,{\scriptstyle\pm\,0.69}$ & \cellcolor{gemmaFour!23.3}\color{black}$54.13\,{\scriptstyle\pm\,2.63}$ & \cellcolor{gemmaFour!25.6}\color{black}$55.60\,{\scriptstyle\pm\,1.09}$ & \cellcolor{gemmaFour!21.7}\color{black}$53.07\,{\scriptstyle\pm\,0.98}$ \\
        \addlinespace[1pt]
        \cellcolor{gemmaTwelve!8} & Basin ordering & --- & \cellcolor{gemmaTwelve!34.0}$60.53\,{\scriptstyle\pm\,2.79}$ & \cellcolor{gemmaTwelve!51.9}\color{black}$69.48\,{\scriptstyle\pm\,1.41}$ & \cellcolor{gemmaTwelve!44.9}\color{black}$66.19\,{\scriptstyle\pm\,1.59}$ & \cellcolor{gemmaTwelve!55.7}\color{black}$71.21\,{\scriptstyle\pm\,1.39}$ \\
        \cellcolor{gemmaTwelve!8}\multirow{-2}{*}{Gemma-3-12B-PT} & Top-1 correctness & \cellcolor{gemmaTwelve!19.7}\color{black}$51.67$ & \cellcolor{gemmaTwelve!9.5}$43.00\,{\scriptstyle\pm\,3.52}$ & \cellcolor{gemmaTwelve!36.2}\color{black}$61.73\,{\scriptstyle\pm\,2.90}$ & \cellcolor{gemmaTwelve!38.7}\color{black}$63.07\,{\scriptstyle\pm\,1.55}$ & \cellcolor{gemmaTwelve!45.9}\color{black}$66.67\,{\scriptstyle\pm\,2.17}$ \\
        \bottomrule
    \end{tabular}%
    }
\end{table}

%% file: section/iclr/appendix_d_related_work.tex
\vspace{-0.1in}
\section{Detailed Related Work}
\vspace{-0.1in}
\label{app:detailed-related-work}

\vspace{-0.1in}
\subsection{Detailed Related Work}
\vspace{-0.1in}

\paragraph{Concept Representations and Information Geometry.}
\citet{park2024linear} formalize linear concept representations through
counterfactual relations and connect readout and steering using a causal
inner product. Categorical and hierarchical concept geometry is studied by
\citet{park2025categorical}. A distinct question concerns the origin of
linearity: \citet{jiang2024origins} derive linear representations from the
next-token loss and gradient descent's implicit bias in a latent-variable
model. \citet{park2026information} study the information geometry induced
by softmax distributions. Their dual-steering analysis starts from a
concept variable with a linear probability readout and, under a
concept-factorization condition, characterizes control of the target
concept with minimal change to the off-target distribution.
The \textit{Answer-Basin Representation Hypothesis} (ABRH) places the answer measure at the center of the representation account.
Its statistics organize answer-related linear structure, while the
relationship between those statistics and external labels explains when
readout and intervention exhibit concept-consistent effects.

\paragraph{Answer Distributions and Semantic Uncertainty.}
Self-consistency combines sampled reasoning paths according to their final
answers \citep{wang2023selfconsistency}. Semantic entropy groups generations
by semantic equivalence and measures uncertainty over the resulting
meaning distribution \citep{kuhn2023semantic,farquhar2024semanticentropy}.
These constructions are related to answer measures through the aggregation
of text outputs into answer or meaning identities.
\citet{kossen2024semanticprobes} train linear probes to predict semantic
entropy from hidden states, including states before generation, for
computationally efficient uncertainty estimation and hallucination detection.
This provides a direct connection to the readability of distributional
statistics. ABRH gives the measure an explanatory role in the organization
of representations: concentration before generation and relative answer
mass after generation are linked aspects of the same theoretical object.
The framework uses this object to connect representation geometry,
concept-related readout, and intervention behavior.

\paragraph{Concept Probing and Activation Steering.}
Truthfulness probes examine information available in hidden states
\citep{azaria2023internal,marks2024geometry}; question-only probes predict
answer correctness before generation \citep{cencerrado2025noanswer}.
Analyses of disagreement between probe predictions and model outputs
examine the interpretation of such readouts \citep{liu2023cognitive}.
Activation steering uses directions obtained from attributes or contrastive
activations to change model outputs
\citep{turner2023activation,li2023inference,zou2023representation,rimsky2024caa},
including refusal behavior \citep{arditi2024refusal}.
These studies concern the predictive and causal uses of representation
directions. ABRH addresses their interpretation through the model's answer
measure: the relationship between labels and answer mass links the
statistical orientation of a direction to its concept-level effects.